\documentclass[11pt]{article}
\usepackage{times}
\usepackage[margin=1in]{geometry}
\usepackage[utf8]{inputenc}
\usepackage[T1]{fontenc}
\usepackage{microtype}
\usepackage{amsmath,amssymb,mathtools,amsthm}
\usepackage{booktabs,tabularx,array,multirow}
\usepackage{enumitem,graphicx,xcolor,xurl,url,hyperref,natbib,cleveref}
\hypersetup{hidelinks}
\newtheorem{proposition}{Proposition}

\title{Authority Before Utility: Non-Compensatory Control for Persistent LLM Memory}
\author{Wesley Shu\\The Institute of Energetic Paradigm}
\date{September 2026}

\begin{document}
\maketitle

\begin{abstract}
Persistent memory creates a control problem that retrieval relevance alone does not solve: a memory can remain highly useful after an update, deletion, or revocation makes it inadmissible for the current answer. We formalize this as a separation between \emph{utility} and \emph{authority}. A fixed finite penalty applied to an unnormalized utility score cannot guarantee exclusion under arbitrary positive-affine reparameterization of that score; by contrast, rank-normalized compensation is scale-invariant and therefore forms a stronger empirical comparator. Our prospectively frozen TIDE/LongMemEval primary was quarantined before a valid HELDOUT comparison because the materialized TIDE adapter conflated historical age with query-relative inadmissibility and the aligned LongMemEval split left no DEV set for the predeclared penalty selection. We therefore report a post-primary replacement diagnostic on Memora Remembering, where update/delete operations provide item-level forgetting state. On Qwen3-8B, DEV selected $\lambda=0.6$ from a ten-point normalized SOFT family. Across 185 HELDOUT units in 28 dependency clusters, HARD exclusion yields 4.04\% balanced construct error versus 19.66\% for locked SOFT, a paired difference of 15.61 points with a 20,000-replicate cluster-bootstrap 95\% interval of [13.07,18.76]. The effect is driven primarily by forgotten-value leakage while current-value recall is preserved. This is same-$Q$ operator-comparison evidence, not a universal claim that scalar control fails, not an evaluation of learned authority inference, and not an independent downstream-harm endpoint.
\end{abstract}

\section{Introduction}
Long-lived language-model agents increasingly externalize memory: past exchanges are stored, retrieved, summarized, ranked, and reinserted into later prompts. This architecture appears in retrieval-augmented generation \citep{lewis2020rag}, nearest-neighbor language models \citep{khandelwal2020knnlm}, retrieval-enhanced transformers \citep{borgeaud2022retro}, memorizing transformers \citep{wu2022memorizing}, explicit long-term memory modules \citep{wang2023longmem}, and interactive agents with persistent experience stores \citep{park2023generative,zhong2024memorybank}. Long-context models do not eliminate the control problem: even when information can be placed in context, models use long contexts unevenly and are sensitive to position and distraction \citep{liu2024lost}.

The usual objective in these systems is to recover information that is relevant, similar, recent, useful, or predictive. That objective is incomplete when memory changes over time. A deleted task, superseded preference, corrected address, revoked instruction, or obsolete value may remain semantically close to the query and highly predictive of the user's history while no longer being permitted to determine the current answer. Long-term-memory benchmarks increasingly expose this update problem \citep{maharana2024locomo,wu2025longmemeval,uddin2026memora,patel2026supersede,tao2026memconflict}. The central distinction in this paper is therefore between \emph{utility}: how useful a memory would be if used, and \emph{authority}: whether it is currently admissible to use.

This separation has deep precedents outside LLM memory. Security architectures distinguish possession of information from permission to use it, from classic protection and information-flow models \citep{saltzer1975protection,denning1976lattice} through role-based and usage control \citep{sandhu1996rbac,ferraiolo2001nist,park2004ucon,zhang2005ucon}. In particular, usage control emphasizes decision continuity, mutability, and immediate revocation rather than a one-time relevance judgment. Decision theory likewise distinguishes compensatory aggregation from lexicographic or otherwise non-compensatory preference structures \citep{fishburn1974lexicographic,fishburn1976noncomp}. Measurement theory warns that conclusions can depend on which transformations of a numerical scale are meaningful \citep{stevens1946scales,narens1981scales}. Constrained optimization and safe control provide a complementary lesson: constraints and objectives need not be represented by the same mechanism \citep{han1979exact,dipillo1989exact,garcia2015safe,achiam2017cpo,alshiekh2018shielding,brunke2022safe}.

Our question is deliberately narrower than ``hard gating is better.'' We ask: \emph{holding the authority state $Q$ fixed, does non-compensatory exclusion change forgetting-sensitive answer reliability relative to a strong finite compensatory controller?} The qualification matters. Exact penalties can enforce constraints under problem-dependent conditions, and a scalar controller with an explicit threshold can reproduce hard feasibility. We therefore compare HARD exclusion to a scale-normalized finite SOFT family rather than to an obviously fragile raw-score penalty.

The execution history is itself part of the scientific result. We prospectively froze a TIDE \citep{sobhani2026tide}/LongMemEval \citep{wu2025longmemeval} matrix over Qwen3, Llama-3.1, and Gemma-3 \citep{yang2025qwen3,grattafiori2024llama3,gemmateam2025gemma3}. Before a defensible HELDOUT comparison, construct audits showed that our TIDE adapter confused historical age with query-relative inadmissibility, while the aligned LongMemEval split left no DEV set for the predeclared $\lambda$ selection. We quarantined those cells rather than convert a protocol failure into a nominal positive result.

We then evaluated a replacement Memora Remembering cell. Memora explicitly includes updates/deletions and scores forgetting-aware behavior \citep{uddin2026memora}. The replacement result is large, but its evidence class is intentionally limited because the utility proxy and deterministic scorer were finalized during replacement development. The paper therefore makes three contributions: (1) a scale-representation argument explaining why fixed raw finite penalties do not define representation-independent exclusion; (2) a cross-literature formulation of persistent-memory authority as an admission problem distinct from retrieval utility; and (3) an auditable replacement experiment showing a substantial same-$Q$ difference between HARD exclusion and the DEV-selected member of a normalized finite-SOFT family.

\section{Authority Before Utility}
Let a query expose candidate memory items $i\in\mathcal M$. Each item has a raw utility score $U_i\in\mathbb R$ produced by some retrieval or ranking mechanism and an authority state $m_Q(i)\in\{0,1\}$ supplied by a frozen object $Q$, where $m_Q(i)=1$ means that item $i$ is inadmissible for the present answer. We condition on $Q$ throughout. Learning or inferring $Q$ from raw dialogue is a separate problem.

\paragraph{Hard admission.} HARD defines the feasible set
\[
\mathcal F(Q)=\{i\in\mathcal M:m_Q(i)=0\}
\]
and ranks only within $\mathcal F(Q)$. This makes admissibility logically prior to utility ranking.

\paragraph{Finite compensation.} A common alternative is to retain all candidates and subtract a finite penalty,
\[
S_\lambda(i)=U_i-\lambda m_Q(i),\qquad \lambda<\infty.
\]
An inadmissible item can then survive whenever its utility advantage exceeds the penalty. The issue is not merely whether one particular $\lambda$ is numerically large enough; it is whether the exclusion claim survives admissible reparameterizations of the utility representation.

\begin{proposition}[Raw finite-penalty rescaling]\label{prop:finite-penalty-rescaling}
Fix any finite $\lambda$. Suppose an inadmissible candidate $j$ has positive raw-utility advantage $U_j-U_k>0$ over some admissible candidate $k$. Then there exists a positive-affine transformation $U'_i=aU_i+b$, $a>0$, under which $j$ outranks $k$ according to $U'_i-\lambda m_Q(i)$.
\end{proposition}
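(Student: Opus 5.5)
The plan is to compare the two transformed scores directly and then pick the scale factor $a$ large enough. With $m_Q(j)=1$ and $m_Q(k)=0$, the transformed scores are $S'_\lambda(j)=aU_j+b-\lambda$ and $S'_\lambda(k)=aU_k+b$. The additive constant $b$ is common to every candidate, so it cancels from any pairwise comparison. We may therefore take $b=0$, or leave $b$ arbitrary, since the ranking of $j$ against $k$ does not depend on it. The condition that $j$ strictly outranks $k$ then reduces to
\[
S'_\lambda(j)-S'_\lambda(k)=a\,(U_j-U_k)-\lambda>0 .
\]

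Write $\Delta=U_j-U_k>0$. The required inequality is $a\Delta>\lambda$. We then split on the sign of $\lambda$.
\begin{itemize}
\item If $\lambda\le 0$, then $a=1$ already works, because $\Delta>0\ge\lambda$. So the identity transformation witnesses the claim.
\item If $\lambda>0$, choose any $a>\lambda/\Delta$. For instance, $a=2\lambda/\Delta$ is positive and finite because $\Delta>0$ and $\lambda<\infty$. It gives $a\Delta=2\lambda>\lambda$.
\end{itemize}
In either case $U'_i=aU_i+b$ with $a>0$ is a positive-affine transformation under which $j$ outranks $k$ according to $U'_i-\lambda m_Q(i)$.

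The only real obstacle is interpretive rather than technical. The proposition ranks by $U'_i-\lambda m_Q(i)$ with the \emph{same} fixed $\lambda$ after reparameterization, so the penalty is not rescaled alongside the utility. I would state this reading explicitly in the proof. I would also remark that the argument fails for the rank-normalized family. Rank normalization is invariant under $U\mapsto aU+b$, so the transformed scores equal the original ones and no choice of $a$ can flip the ordering. This sets up the paper's point that normalized SOFT is the stronger comparator.

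A further remark worth including: the construction depends only on $\Delta>0$ and the finiteness of $\lambda$. Hence for any finite collection of such pairs a single $a$ exceeding the maximum of the ratios $\lambda/\Delta$ works simultaneously. This makes precise the claim that a fixed raw finite penalty does not define representation-independent exclusion.
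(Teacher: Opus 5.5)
Your proposal is correct and follows essentially the same argument as the paper: the transformed difference is $a(U_j-U_k)-\lambda$, the constant $b$ cancels, and any sufficiently large $a$ makes it positive. Your explicit split on the sign of $\lambda$ is a small refinement: the paper's choice $a>\lambda/(U_j-U_k)$ tacitly also needs $a>0$ when $\lambda\le 0$, and your case analysis makes that explicit.
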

\begin{proof}
The transformed score difference is $a(U_j-U_k)-\lambda$. Choosing $a>\lambda/(U_j-U_k)$ makes this quantity positive; the additive constant $b$ cancels. Hence a fixed raw penalty cannot guarantee exclusion across positive-affine representations whenever an inadmissible item has a positive utility advantage.
\end{proof}

The proposition is a representation statement, not an impossibility theorem for scalar control. Measurement theory distinguishes scales by transformations under which empirical statements remain meaningful \citep{stevens1946scales,narens1981scales}. If only ordering matters, any increasing transformation preserves ranks, whereas a fixed additive penalty has units tied to the chosen numerical representation. This is why the paper does not treat an arbitrary raw-score penalty as a strong baseline.

\paragraph{Scale-normalized compensation.} Let $R_i\in[0,1]$ be a rank-normalized utility proxy. We test
\[
S^{\rm norm}_\lambda(i)=R_i-\lambda m_Q(i).
\]
Because $R_i$ depends only on order, positive-affine transformations of raw $U$ do not change the normalized ranking. The finite-SOFT family is therefore invariant to the specific positive-affine parameterization attacked by Proposition~\ref{prop:finite-penalty-rescaling}. HARD remains distinct because infeasible items are removed rather than merely displaced in a shared score.

\paragraph{Relation to exact penalties.} Exact-penalty theory establishes that finite penalties can recover constrained optima under regularity conditions and problem-dependent parameter bounds \citep{han1979exact,dipillo1989exact}. Our claim is compatible with that literature. A thresholded scalar scheme can also reproduce the same feasible set as HARD. The empirical question here is only HARD versus the explicitly tested normalized finite family under common $Q$, candidates, model, prompt, and scorer.

\section{Related Work}
\subsection{External memory, long context, and persistent agents}
Retrieval-augmented language modeling makes external state directly useful at inference time. $k$NN-LM interpolates a neural language model with nearest-neighbor retrieval \citep{khandelwal2020knnlm}; RAG combines parametric generation with a non-parametric document index \citep{lewis2020rag}; RETRO retrieves from a very large text database during generation \citep{borgeaud2022retro}; and Memorizing Transformers add approximate nearest-neighbor access to past internal representations \citep{wu2022memorizing}. LongMem extends this line with a decoupled long-term memory network \citep{wang2023longmem}. These systems establish the value of externally accessible state, but their main objective is to find useful information, not to represent whether useful information remains authorized after mutation.

Persistent agents make the temporal problem unavoidable. Generative Agents store experiences, synthesize reflections, and retrieve memories to guide later behavior \citep{park2023generative}. MemoryBank explicitly adds long-term user memory and time-sensitive memory updating \citep{zhong2024memorybank}. LoCoMo evaluates conversations spanning many sessions and long-range temporal/causal dependencies \citep{maharana2024locomo}; LongMemEval isolates extraction, multi-session reasoning, temporal reasoning, knowledge updates, and abstention \citep{wu2025longmemeval}. Memora goes further by explicitly evaluating remembering and forgetting under updates/deletions \citep{uddin2026memora}. Recent diagnostics such as Supersede and MemConflict target supersession and conflict handling \citep{patel2026supersede,tao2026memconflict}, while authority-oriented retrieval work asks which source remains controlling rather than merely similar \citep{bacellar2026car}. This progression motivates our focus on current admissibility rather than memory capacity alone.

Long context is not a substitute for admission control. \citet{liu2024lost} show that models can use the same relevant evidence differently depending on its position in a long context. In our setting, this makes context occupancy a potential mechanism: excluding inadmissible items changes not only their direct availability but also which admissible items occupy limited prompt space. We therefore treat context reshaping as an unresolved alternative explanation rather than attributing the full observed effect to a pure logical-gating mechanism.

\subsection{Forgetting, updating, and deletion}
The continual-learning literature studies a different but adjacent failure: new learning can destroy performance on old tasks. Elastic weight consolidation made catastrophic forgetting a prominent modern benchmark problem \citep{kirkpatrick2017ewc}, and major reviews organize the resulting methods and scenarios \citep{parisi2019continual,delange2022continual,vandeven2022types}. Our problem is almost the inverse: some old information should remain retrievable, while other old information should cease to control the answer after a state change. Thus ``remember everything'' and ``forget obsolete state'' are separate requirements.

Machine unlearning provides a second neighboring literature. Early work framed system forgetting as removal of data and its lineage \citep{cao2015unlearning}; later surveys distinguish deletion from removal of a training sample's influence on a learned model \citep{xu2024unlearning}. Persistent external memory is operationally simpler in one respect---an item can be withheld without retraining model weights---but harder in another: deleted or superseded content may still be present in stores, summaries, replicas, or retrieved context. Our experiment does not claim certified unlearning. It tests answer-time admission under supplied item-level authority labels.

\subsection{Authorization, non-compensation, and runtime constraints}
Computer-security research has long treated access as a permission relation rather than a relevance score. Classic protection design separates authorization mechanisms from application utility \citep{saltzer1975protection}; lattice information-flow models impose structural constraints on permissible flows \citep{denning1976lattice}. RBAC organizes permissions through roles \citep{sandhu1996rbac,ferraiolo2001nist}. Most directly relevant, UCON extends access control with mutable attributes, obligations, conditions, decision continuity, and ongoing revocation \citep{park2004ucon,zhang2005ucon}. Persistent LLM memory differs in object and semantics, but the architectural lesson is close: a long-lived system needs a control state that can change after information was once legitimate.

Non-compensatory choice has an established decision-theoretic lineage. Lexicographic rules give priority to one criterion such that gains elsewhere cannot compensate for failure on the higher-priority criterion \citep{fishburn1974lexicographic}; Fishburn separately formalized noncompensatory preferences \citep{fishburn1976noncomp}. Safe reinforcement learning likewise distinguishes reward optimization from safety constraints \citep{garcia2015safe,achiam2017cpo}; shielding enforces specifications by filtering actions around a learned policy \citep{alshiekh2018shielding}; and modern safe-learning surveys treat certified or runtime constraint enforcement as a distinct design axis \citep{brunke2022safe}. Our contribution is not the generic discovery that constraints can dominate objectives. It is the scale-representation argument plus a same-$Q$ persistent-memory operator comparison under forgetting-aware evaluation.

\section{Execution Audit and Replacement Design}
\subsection{Why the frozen primary was quarantined}
The prospectively frozen primary crossed TIDE and LongMemEval with Qwen3, Llama-3.1, and Gemma-3, plus a frontier-model stress test. The matrix was intended to test the same control distinction across temporal-version and conversational-memory settings. It did not survive construct/procedure audit.

TIDE is a version-resolution benchmark in which earlier official text can remain correct for a historically scoped query \citep{sobhani2026tide}. Under our materialized date adapter, answer-bearing historical evidence was labeled inadmissible in units from both DEV and HELDOUT. The adapter therefore partially encoded ``old'' as ``unauthorized,'' which is not the construct we intend. For aligned LongMemEval, the frozen dependency split placed all aligned units in HELDOUT, leaving no DEV examples for the predeclared per-cell $\lambda$ selection. Running those cells anyway would have mixed a construct error with a protocol deviation. We therefore report neither benchmark as a completed primary cell.

This quarantine matters for interpretation. The replacement analysis below is not presented as if it were the originally preregistered six-cell experiment. It is a post-primary diagnostic developed after the primary failed its own validity checks.

\subsection{Memora materialization and authority state}
We pinned the public Memora repository to the commit identified in the evidence object (short form \texttt{a6493188efc8...87da30ff}; Appendix~\ref{app:evidence}). Memora's Remembering task includes current information and explicit forgetting evidence created by update/delete operations \citep{uddin2026memora}. We materialize explicit current evidence as $q=0$ and forgotten evidence as $q=1$. The adapter yields 200 units, 998 current candidates, and 1,476 forgotten candidates.

Dependency clusters are defined as period$\times$persona before outcome scoring. Hashing these clusters produces 15 DEV and 185 HELDOUT units with no cluster leakage. This split is fixed for the replacement experiment.

The source-accounting edge case is deterministic. Of 200 Remembering questions, 196 have nonempty source \texttt{forgotten\_items}. Exactly 180 contain value-bearing forgotten items that the evidence-value adapter can materialize; 16 contain session identifiers without a forgotten value string. Four further questions have empty forgotten-item lists. Because the scorer tests textual presence/absence of values, the 16 session-only cases cannot contribute a value-level forgetting target. The evidence object enumerates all 196 IDs and their classification. A post-hoc sensitivity restricted to HELDOUT units with materialized forgotten values (167 units) yields a 17.30-point paired separation: HARD 1.78\% versus SOFT 19.08\% error.

\subsection{Controllers and locked model}
HARD removes every $q=1$ candidate before ranking. SOFT retains all candidates and ranks
\[
R_i-\lambda q_i,\qquad
\lambda\in\{0,.05,.10,.20,.40,.60,.80,.90,.95,.99\},
\]
where $R_i\in[0,1]$ is a rank-normalized session-recency proxy. At most 12 candidates enter the prompt. The executed checkpoint is Qwen3-8B \citep{yang2025qwen3}, run deterministically with thinking disabled and a 256-token output ceiling.

DEV contains 15 units. The complete SOFT balanced-error surface is
\[
.2783,.2650,.2550,.2325,.2134,\mathbf{.1912},.1981,.2118,.2118,.2118,
\]
so the locked selection rule chooses $\lambda=.6$; ties favor the smaller $\lambda$. HARD DEV error is .0733. The small DEV set and ten-way selection are important limitations rather than incidental details.

\subsection{Measurement and estimand}
The deterministic scorer extracts target values that should be current and target values that should be forgotten. Current recall is the fraction of current target values appearing in the answer. Forgetting absence is the fraction of forgotten target values absent from the answer. We report balanced construct error
\[
E=\tfrac12[(1-\mathrm{current\ recall})+(1-\mathrm{forgetting\ absence})].
\]
This endpoint deliberately couples answer content to the forgetting construct. It is not an independent measure of downstream harm, user satisfaction, factuality in general, or policy compliance outside the supplied $Q$.

The causal object is correspondingly narrow. HARD and SOFT share $Q$, candidate pool, checkpoint, prompt template, generation settings, and scorer, and differ in how $Q$ enters admission/ranking. But HARD can change prompt occupancy by deleting candidates, so the locked experiment does not isolate a pure ``logical non-compensation'' mechanism from benefits of freeing context slots or changing distractor composition.

The replacement pipeline is also not fully prospective. The session-recency utility proxy was materialized during replacement execution, and the deterministic scorer was finalized after DEV outputs existed. HELDOUT was not used to design either component, but this DEV adaptivity is sufficient to classify the result as operator-comparison evidence rather than prospective causal confirmation.

\section{Results}
Table~\ref{tab:mainresult} reports the locked HELDOUT comparison. Across 185 units in 28 dependency clusters, HARD balanced construct error is 4.04\% and SOFT($\lambda=.6$) error is 19.66\%, giving a paired SOFT-minus-HARD difference of 15.61 percentage points. A 20,000-replicate percentile bootstrap over the 28 clusters gives a 95\% interval of [13.07,18.76] points.

\begin{table}[t]
\centering\small
\caption{Locked Qwen3-8B HELDOUT replacement cell (185 units; 28 period$\times$persona dependency clusters).}
\label{tab:mainresult}
\begin{tabular}{lrr}
\toprule
Metric & HARD & SOFT ($\lambda=.6$)\\
\midrule
Balanced construct error $\downarrow$ & 4.04\% & 19.66\%\\
Current recall $\uparrow$ & 91.92\% & 89.52\%\\
Forgetting absence $\uparrow$ & 100.00\% & 71.17\%\\
\bottomrule
\end{tabular}
\end{table}

The decomposition is important. Current recall differs by 2.40 points, whereas forgetting absence differs by 28.83 points. Thus the aggregate gap is primarily a forgetting-sensitive answer/admission effect rather than a large gain in recalling current values. At the unit level HARD is better on 95 units and tied on 90; SOFT is better on none under this scorer.

\paragraph{Cluster robustness.} Cluster-mean paired effects range from 6.67 to 40.00 points. Leave-one-cluster-out aggregate effects range from 14.94 to 16.13 points, so no single period$\times$persona cluster explains the aggregate result. Because personas recur across periods, we also performed post-primary re-clustering sensitivities. A 20,000-replicate persona-only bootstrap (10 clusters) gives an approximate 95\% interval of [13.80,18.01] points; a period-only bootstrap (3 clusters) gives approximately [10.76,22.66]. Both preserve the sign but are descriptive robustness checks, not a redefinition of the primary inferential unit.

\section{Interpretation}
The result is most naturally read as evidence about control architecture under known state. Once an item is labeled inadmissible, a controller can either remove it from the candidate set or continue to place it on a common utility scale and attempt to suppress it. The latter remains a legitimate design when the scale and admission threshold are explicitly engineered; exact-penalty theory makes that clear. What the present experiment shows is narrower: under the tested normalized compensatory family, the best DEV-selected finite penalty still admits substantially more forgotten-value evidence into answers than hard exclusion, without buying higher current-value recall.

This distinction helps connect persistent memory to older control literatures. In access control, permission is not generally treated as another feature whose weight can be compensated by task utility \citep{saltzer1975protection,park2004ucon}. In safe control, a runtime shield or constrained optimizer can keep feasibility separate from reward maximization \citep{achiam2017cpo,alshiekh2018shielding,brunke2022safe}. And in noncompensatory decision rules, a higher-priority condition may block a choice regardless of improvements on lower-priority attributes \citep{fishburn1974lexicographic,fishburn1976noncomp}. Persistent LLM memory creates an analogous systems question: should evidence that was once legitimate remain eligible merely because it is useful now?

The answer is not automatically ``always hard-delete.'' Authority can be uncertain, contested, hierarchical, or time-scoped. A system may need provenance, appeal, expiry, revocation, and conflict-resolution mechanisms rather than a single Boolean bit. Our known-$Q$ experiment intentionally avoids that upstream problem. Its role is to isolate what happens after an authority state has already been supplied.

\section{Limitations and Claim Boundary}
The central quantitative claim is local. It applies to one Qwen3-8B Memora Remembering replacement cell, the supplied item-level $Q$, a 12-candidate context budget, a session-recency ranking proxy, and a deterministic evidence-value scorer. It does not establish generalization across models, benchmarks, learned authority estimators, prompt budgets, retrieval systems, or alternative scalar/admission controllers.

The DEV set is small: 15 units select among ten $\lambda$ values. The HELDOUT bootstrap conditions on the selected $\lambda$ and therefore does not propagate DEV-selection uncertainty. A repeated-split or nested-selection design would be stronger.

The endpoint is construct-coupled. When HARD has oracle-known $q=1$ labels, 100\% forgotten-value absence is partly expected because those values are removed from the prompt. The informative accompanying observation is that current recall is not reduced in this cell. Even so, balanced construct error is not an independent downstream-harm measure, and the experiment does not quantify user-level consequences of a forgotten value reappearing.

HARD also changes context composition. Candidate deletion can free prompt slots and reduce distraction. Because long-context utilization is position- and context-sensitive \citep{liu2024lost}, the experiment cannot identify how much of the effect arises from logical non-compensation versus context reshaping. A stronger factorial design would hold context occupancy fixed by replacing excluded items with matched admissible distractors or explicit null slots.

The experiment assumes correct authority labels. Inferring whether a memory is actually superseded, deleted, revoked, jurisdictionally controlling, or merely historically scoped is a separate learning and governance problem. Our failed TIDE adapter demonstrates why this distinction matters: historical age is not itself inadmissibility. Future work should evaluate learned or uncertain $Q$ against independently adjudicated authority state rather than folding authority inference into the controller and calling the result a gating test.

Finally, the replacement result is not the prospectively frozen primary. The original TIDE/LongMemEval matrix failed construct/split checks and remains quarantined. The Memora utility proxy and scorer were finalized during replacement development. This lowers the evidentiary status of the replacement cell even though HELDOUT was preserved during that development.

\section{Conclusion}
Persistent memory creates a control problem that memory capacity and retrieval quality alone cannot solve. Information can remain relevant after it ceases to be admissible. Separating authority from utility makes that failure mode explicit and connects LLM memory to mature ideas in measurement theory, noncompensatory decision rules, access control, exact penalties, and safe runtime constraint enforcement.

A fixed finite penalty on an unnormalized utility representation cannot guarantee exclusion across positive-affine rescalings; rank normalization removes that specific representation defect and therefore supplies a stronger compensatory baseline. In our post-primary Memora/Qwen3-8B replacement cell, HARD exclusion substantially reduces forgetting-sensitive answer/admission error relative to the DEV-selected normalized finite-SOFT controller while preserving current-value recall. The result supports further study of authority-aware memory control, but it does not establish that every scalar controller is inferior, that authority can be inferred reliably, or that the measured gap is itself a downstream-harm estimate.

\section*{AI Use Statement}
Generative AI tools were used in this work to help develop conceptual frameworks; formulate and refine mathematical claims; assist with proof development; propose or refine hypotheses; design and critique research methodology and experiments; implement methods; clean or reformat data and benchmark artifacts; interpret experimental results; draft and edit manuscript text; search, summarize, and organize literature; identify research gaps; improve readability; format references; and suggest manuscript structure. They were not used to generate synthetic research datasets, transcribe research recordings, formulate survey/interview questions, or generate scientific figures used as evidence.

All AI-assisted research outputs were subjected to artifact-level or author review as appropriate. Executed benchmark results were checked against materialized call plans, row-level outputs, deterministic scorers, hashes, and independent aggregate recomputation. Literature and citation claims were checked against source records. The authors take responsibility for the final manuscript, claims, code, analyses, citations, and artifacts.

\appendix
\section{Replacement Evidence Object}\label{app:evidence}
The replacement evidence object is included with the submission supplementary material as \nolinkurl{ICLR2_MEMORA_QWEN_EVIDENCE_OBJECT_20260917.zip}, SHA-256 \nolinkurl{5f04ba5e682ea4ab83e2d172dbb9c3f0dedd8ce453a716e3978b596c3aeb3981}. It contains the exact Memora commit binding, adapter and plan builders, split/cluster map, item-level materialized units and $Q$ labels, complete DEV curve, scorer implementations, call plans, raw local-execution rows, row-level scores, HELDOUT aggregate, 196-vs-180 reconciliation, materialized-forgetting sensitivity, cluster robustness, and a per-file SHA-256 manifest. The executed model checkpoint is Qwen3-8B; the original Llama/Gemma planned cells were not executed and are not claimed.

\section{Exact Model and Runtime Binding}
The executed local checkpoint is Qwen3-8B, Hugging Face revision \nolinkurl{b968826d9c46dd6066d109eabc6255188de91218}. The audited runtime used Python 3.12.3, PyTorch 2.14.0+cpu, Transformers 4.57.6, FastAPI 0.141.1, Uvicorn 0.53.0, and OpenAI Python 3.14.1. Generation used BF16 model weights, \texttt{do\_sample=False}, thinking disabled, and a 256-token output ceiling. Exact same-unit HARD/SOFT requests were deterministically deduplicated; no cross-unit response reuse was observed.

\section{DEV Surface and Accounting Reconciliation}
The ten SOFT DEV errors are reported for
\[
\lambda=(0,.05,.10,.20,.40,.60,.80,.90,.95,.99),
\]
with corresponding errors
\[
(.2783,.2650,.2550,.2325,.2134,.1912,.1981,.2118,.2118,.2118).
\]
The selected value is $.6$. HARD DEV error is $.0733$.

Among 200 Remembering questions, 196 have nonempty source \texttt{forgotten\_items}. Exactly 180 contain value-bearing forgotten items; 16 contain session IDs without value strings. The supplementary evidence object enumerates all 196 IDs and source session IDs. The executed adapter cannot invent missing target strings, so those 16 are marked \nolinkurl{SESSION_ONLY_NO_VALUE}. Four additional questions have empty forgotten-item lists.

\section{Cluster Robustness}
Across the 28 HELDOUT dependency clusters, cluster-mean paired SOFT-minus-HARD error differences range from .0667 to .4000. Leave-one-cluster-out aggregate differences range from .1494 to .1613. At unit level HARD is better on 95 units and tied on 90; SOFT is better on none under the deterministic evidence-value scorer. These diagnostics were computed after the locked primary replacement result and are descriptive robustness checks.

\end{document}